\newcommand*{\QEDA}{\hfill\ensuremath{\triangle}}
\newcommand{\R}{\mathbb{R}}
\newcommand{\E}{\operatorname{\mathbb{E}}}
\newcommand{\BE}{\operatorname{\mathbb{E}}}
\newcommand{\cP}{\mathcal{P}}
\newcommand{\I}{\mathcal{I}}
\DeclarePairedDelimiter{\ceil}{\lceil}{\rceil}
\newcommand{\Span}{{\rm span}}
\newcommand*{\QE}{\hfill\ensuremath{\blacksquare}}%
\newenvironment{pfof}[1]{\vspace{1ex}\noindent{\itshape Proof of
		#1:}\hspace{0.5em}} {\hfill\QE\vspace{1ex}}
\newcommand{\alias}{C:/Users/yuzhe/Dropbox/bib/alias}
\newcommand{\New}{C:/Users/yuzhe/Dropbox/bib/New}
\newcommand{\Main}{C:/Users/yuzhe/Dropbox/bib/Main}
\newcommand{\FP}{C:/Users/yuzhe/Dropbox/bib/FP}
\DeclareMathOperator*{\argmax}{arg\,max\,\,}
\newtheorem{theorem}{Theorem}[section]
\newtheorem{lemma}[theorem]{Lemma}
\newtheorem{assumption}[theorem]{Assumption}
\renewcommand*{\@opargbegintheorem}[3]{\trivlist
	\item[\hskip \labelsep{\itshape #1\ #2}] \textbf{(#3)}:\ \itshape }
\def\BibTeX{{\rm B\kern-.05em{\sc i\kern-.025em b}\kern-.08em
    T\kern-.1667em\lower.7ex\hbox{E}\kern-.125emX}}
\begin{document}

\title{\LARGE \bf Representation Learning for Context-Dependent Decision-Making
\thanks{Y. Qin and F. Pasqualetti are with the Department of Mechanical Engineering (\{yuzhenqin,fabiopas\}@engr.ucr.edu), and S. Oymak is with the Department of Electrical and Computer Engineering (oymak@ece.ucr.edu), University of California, Riverside, CA, USA. T. Menara is with the Department of Mechanical and Aerospace Engineering, University of California, San Diego, La Jolla, CA 92093, USA. S. Ching is with the Department of Electrical and Systems Engineering and Biomedical Engineering,	Washington University in St. Louis, MO, USA. This material was based upon work supported by awards ARO W911NF1910360 	and NSF NCS-FO-1926829. 
 }
}


\author{Yuzhen Qin, Tommaso Menara, Samet Oymak, ShiNung Ching, and Fabio Pasqualetti
}

\maketitle

\begin{abstract}
	Humans are capable of adjusting to changing environments flexibly and quickly. Empirical evidence has revealed that representation learning plays a crucial role in endowing humans with such a capability. Inspired by this observation, we study representation learning in the sequential decision-making scenario with contextual changes. We propose an online algorithm that is able to learn and transfer context-dependent representations and show that it significantly outperforms the existing ones that do not learn representations adaptively. As a case study, we apply our algorithm to the Wisconsin Card Sorting Task, a well-established test for the mental flexibility of humans in sequential decision-making. By comparing our algorithm with the standard Q-learning and Deep-Q learning algorithms, we demonstrate the benefits of adaptive representation learning. 
\end{abstract}

\section{Introduction}
Real-world decision-making is complicated since environments are often complex and rapidly changing. Yet, human beings have shown the remarkable ability to make good decisions in such environments. At the core of this ability is the flexibility to adapt their behaviors in different situations \cite{RA-SYS-NY:2021}. Such adaption is usually fast since humans learn to abstract experiences into compact representations that support the efficient construction of new strategies \cite{FNT-FMJ:2020}.



Lacking the ability to adapt to new environments and abstract compressed information from experiences, existing learning techniques often struggle in complex scenarios that undergo contextual changes. To elaborate on this point, let us consider a running example -- the Wisconsin Card Sorting Task (WCST). The WCST is one of the most frequently used neuropsychological tests to assess people's ability to abstract information and shift between contexts \cite{BB-GS-SW-BKF:2005}. Illustrated in Fig.~\ref{conceptual}, participants are initially given four cards and are required to associate a sequence of stimulus cards with these four cards according to some sorting rules -- number, color, and shape. Participants have no prior knowledge of the current sorting rule, thus need to learn it by trial and error. They receive a feedback indicating whether their sort action is correct or incorrect. What makes the task more challenging is that the sorting rule changes every once in a while without informing the participants. Thus, the participants need to learn the changes and adjust their strategy. 
 
Healthy humans usually perform very well in the WCST. Some neuroimaging studies have found that different brain regions, such as the dorsolateral prefrontal cortex and the anterior cingulate cortex, play crucial roles in context shifting, error detection, and abstraction, all of which are needed by the WCST \cite{LCH-SK-MJC-FGR:2006}. By contrast, classical learning algorithms such as tabular-Q-learning and Deep-Q-learning struggle in the WCST, especially when the sorting rule changes rapidly. It can be seen from Fig.~\ref{conceptual} that standard reinforcement learning (RL) algorithms\footnote{For the Deep-Q-learning, we considered a three-layer structure with 12 nodes in the hidden layers (more details can be found in Section~\ref{simulation}). Deeper or wider networks were also tried, but similar performances were observed.} perform barely better than the strategy that takes random sorting actions at every round.   

Motivated by these observations, we aim to develop decision-making strategies that have more human-like performance. In this paper, we focus on demonstrating the benefits of the ability to abstract compact information (i.e., learn the representation) and adapt to changing contexts in the framework of a sequential decision-making model -- linear multi-armed bandits. As we will show later, the WCST can be readily modeled in this framework. 
\begin{figure}
	\centering
	\includegraphics[scale=0.8]{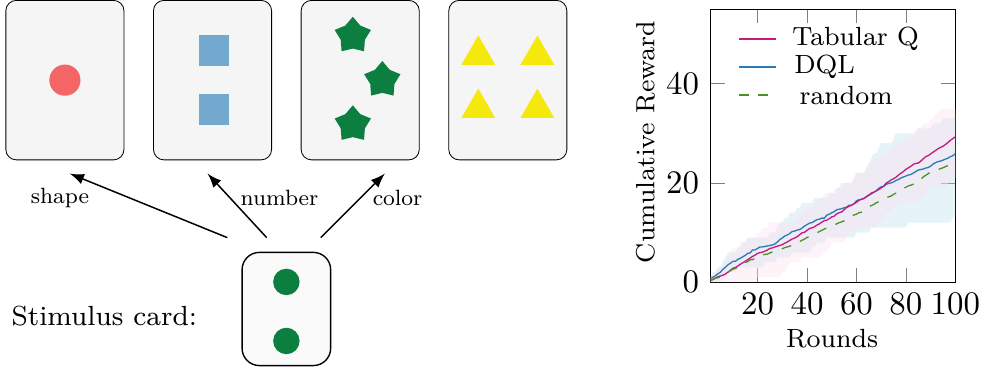}
	\caption{The Wisconsin Card Sorting Task. Left: Illustration of the task. Participants need to sort a sequence of stimulus cards into four categories according to unknown changing rules: number, color, and shape. Right: Performance of classical reinforcement learning algorithms in this task where the sorting rule changes after every 20 rounds. Here, we consider that participants receive reward 1 for a correct sorting action and 0 otherwise Each shaded area contains 20 realizations of the corresponding algorithm.}
	\label{conceptual}
\end{figure}

\textbf{Related Work.} As a classical model for decision-making, multi-armed bandits have attracted extensive interests. The Upper Confidence Bound (UCB) algorithm and its variants have proven their strength in tackling multi-armed bandit problems (e.g., see \cite{AP:2002,AYY-PD-SC:2011}). Various generalizations of the classical bandit problem have been studied, in which non-stationary reward functions \cite{RY-VC-CO:2019,WL-SV:2021}, restless arms \cite{GT-CK:2020}, satisficing reward objectives \cite{RP-SV-LNE:2016}, risk-averse decision-makers \cite{MM-CO:2021}, heavy-tailed reward distributions \cite{WL-SV:2020}, and multiple players \cite{HMK-DS:2021} are considered. Recently, increasing attention has been also paid to tackling bandit problems in a distributed fashion (e.g., see \cite{DK-NN-RJ:2014,LP-SV-LNE:2021,UM-NEL:2020,Zhu-Liu:2021}).  

Representation learning has been applied to a wide range of practical problems including natural language processing, computer vision, and reinforcement learning \cite{BY-CA-VP:2013}.
  Some recent studies have shown that representation learning improves data efficiency in the multi-task linear regression \cite{TN-JC-JMI:2021}.
Representation learning has been proven to be beneficial for multi-task bandit problems, e.g., see \cite{LS-AK-AA-HB:2019,JKS-WR-WS-NR:2019,LY-MA-TA:2020,Yang-Hu-Lee:2021}. Most of the aforementioned studies focus on batch learning where all the tasks are played simultaneously. Despite some attempts (e.g., see \cite{AMG-LA-BE:2013}), results on sequential bandits are sparse, although one often needs to execute tasks sequentially in real life. 

\textbf{Paper Contribution.} In this paper, we consider a decision-making scenario with changing contexts. A multi-task decision-making model with tasks sequentially drawn from distinct sets is used to describe a dynamic environment. Our main contribution is an algorithm that is able to abstract low-dimensional representations and adapt to contextual changes. We further derive some analytical results, showing the benefits of adaptive representation learning in complex and dynamic environments. To demonstrate our theoretical findings, we apply our algorithm to the WCST and show that it significantly outperforms classical RL algorithms. 

\textbf{Notation.} Let $\R$, $\R^+$, and $\mathbb Z^+$ be the sets of real numbers, positive reals, and positive integers, respectively. Given a matrix $A\in\R^{m\times n}$, $\Span(A)$ denotes its column space, $A_\perp$ denote the matrix with orthonormal columns that form the perpendicular complement of $\Span(A)$,  $\|A\|_F$ denotes its Frobenius norm, and $[A]_i$ denotes its $i$th column. For any $x\in R^+$, $\ceil{x}$ denotes the smallest integer larger than $x$. 
Given two functions $f,g:\R^+\to \R^+$, we write $f(x)=O(g(x))$ if there is $M_o>0$ and $x_0>0$ such that $f(x)\le M_og(x)$ for all $x\ge x_0$, and $f(x)=\tilde{O}(g(x))$ if $f(x)=O(g(x)\log^k (x))$. Also, we denote $f(x)=\Omega(g(x))$ if there is $M_\Omega>0$ and $x_0>0$ such that $f(x)\ge M_\Omega g(x)$ for all $x\ge x_0$, and $f(x)=\Theta(g(x))$ if $f(x)=O(g(x))$ and $f(x)=\Omega(g(x))$. 

\section{Problem Setup}\label{ProForm}
Motivated by real-world tasks like the WCST, we consider the following sequential decision-making model:
\begin{align}
	y_t = x_t^\top \theta_{\sigma(t)}+\eta_t, \label{noise}
\end{align}
where $x_t \in \mathcal A \subseteq \mathbb{R}^d$ is the action taken from the action set $\mathcal A$ at round $t$, and $y_t \in \mathbb{R}$ is the reward received by the agent (i.e., decision maker). The reward depends on the action in a linear way determined by the \textit{unknown} coefficient $\theta_{\sigma(t)}$, and is also affected by the $1$-sub-Gaussian noise $\eta_t$ that models the uncertainty. To make good decisions, the agent needs to learn $\theta_{\sigma(t)}$ under the influence of uncertainty.
This decision-making model is also known as linear bandits \cite{Dani-Hayes-Kakade:2008}. Note that the coefficient $\theta_{\sigma(t)}$ is time-varying, and $\sigma(t)$ is the switching signal. For simplicity, we assume that each task is played for $N$ rounds, i.e., $\sigma(t)$ changes its value after every $N$ rounds. Further, we assume that the agent plays $S$ tasks in total, and denote $\mathcal S=\{\theta_1,\theta_2,\dots,\theta_S\}$ as the task sequence. 

To model the context changes that underlie real-world tasks like the WCST, we assume that $\theta_{\sigma(t)}$ takes values from different sets. Specifically, we assume there are $m$ sets $\mathcal S_1,\mathcal S_2,\dots,\mathcal S_m$ from which $\theta_{\sigma(t)}$ takes values in sequence. In each $\mathcal S_k$, there are $n_k$ ($n_k$ can be infinite) tasks $\theta^k_1,\dots,\dots,\theta^k_{n_k}$, and we assume that they share a common linear feature extractor.  Different sets have different feature extractors. Specifically, there is $B_k\in\R^{d \times r_i}$ with orthonormal vectors such that for any $\theta^k_i$ there exists $\alpha_i^k\in \R^{r_i}$ so that $\theta^k_i=B_k \alpha^k_i$  (see Fig.~\ref{Pro:setup}).
 For simplicity, we assume that all the extractors have the same dimension $r$, i.e., $r_i=r$ for all $i$.  
 Here, each of these mutually different matrices $B_1,\dots,B_m$ are also referred to as a linear \textit{representation} \cite{HJ-CX-JC-LL-WL:2021} for the tasks in the respective set. 
 
 As for real-world problems like the WCST, $B_k$ describes the low-dimensional information that participants can abstract. For different contexts, participants usually need to abstract distinct low-dimensional features. 
Similar to the WCST in which participants do not know when the sorting rule changes, we further assume that the agent is not informed when $\theta_{\sigma(t)}$ starts to take values from a different task set. Denote $\tau_k, k=1,\dots,m$, as the \textit{unknown} number of sequential tasks that $\theta_{\sigma(t)}$ takes from $\mathcal S_k$.

\begin{figure}
	\centering
	\includegraphics[scale=0.58]{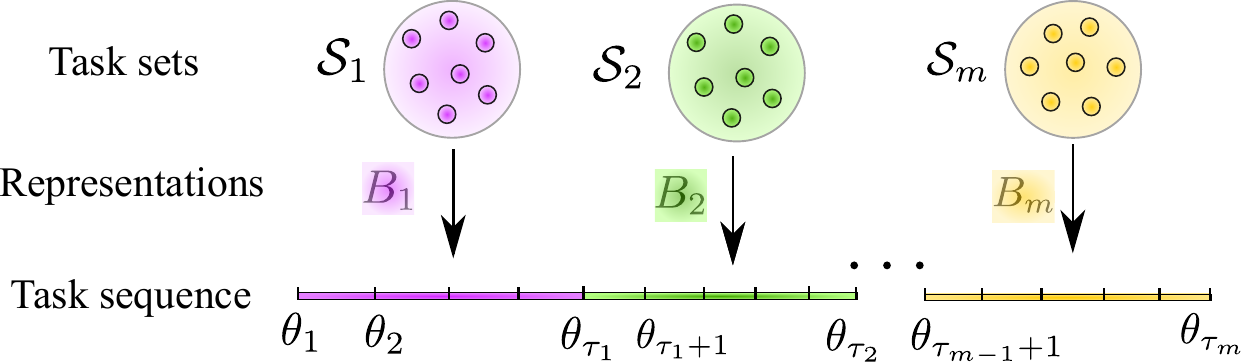}
	\caption{Sequential decision-making scenario with contextual changes. Tasks are taken from distinct sets in sequence. The tasks in each set share a low-dimensional representation. The length of each subsequence is unknown.}
	\label{Pro:setup}
	\vspace{-13pt}
\end{figure}

The agent's goal is to maximize the cumulative reward over the course of $SN$ rounds. 
To measure the performance, we introduce the regret
$
	R_{SN}= \sum_{t=1}^{SN} (x_{t}^*-x_{t})^\top\theta_{s(t)},
$
where $x^*_{t}$ is the optimal action that maximizes the reward at round $t$. Given $\theta$, denote $g(\theta)=\argmax_{x \in \mathcal A} x^\top \theta$, and then $x^*_t=g(\theta_{s(t)})$. The agent's objective is then equivalent to minimizing the regret $R_{SN}$.


We next make some standard assumptions on the action set $\mathcal A$ and the task coefficients following existing studies (e.g., see \cite{RP-TJN:2010,LY-WY-CX-ZY:2021}), which are considered to be satisfied throughout the remainder of this paper. 
\begin{assumption}\label{Assump:1}
	We assume that: (a) the action set $\mathcal A$ is a unit ball centered at the origin, i.e., $\mathcal A:=\{x\in \mathbb R^d: \|x\|\le 1\}$, and (b) there are positive constants $\phi_{\min}$ and $\phi_{\max}$ so that $\phi_{\min}\le \|\theta_s\| \le \phi_{\max}$ for all  $s \in \{1,2,\dots,S\}$.
\end{assumption}


Inspired by humans' strategy, we seek to equip the agent with the ability to learn and exploit representations and to quickly adjust to contextual changes so that it can perform well even in complex environments with context changes.  

\section{Adaptive Representation Learning}
In this section, we present our main results. We first analytically demonstrate why representation learning is beneficial especially for complex tasks that have high dimensions. Second, we propose a strategy to explore and transfer the representation under the setting of sequential tasks. Finally, we present our main algorithm that has the ability to adjust to contextual changes.

\subsection{Benefits of representation learning}
To demonstrate the benefits of representation learning, we first restrict our attention to a single-task model
\begin{align}\label{single}
	y_t=x_t^\top \theta +\eta_t,
\end{align}
where the task $\theta$ is played for $N$ times. For this classical model, existing studies have established the lower bound for its regret \cite{RP-TJN:2010,Dani-Hayes-Kakade:2008}, presented in the next lemma.

\begin{lemma}[\textbf{Classical Lower Bound}]\label{Lower:single}
	Let $\mathcal P$ be the set of all policies, and $\mathcal I$ be the set of	all the possible tasks. Then, for any $d\in \mathbb{Z}^+$ and $N>d^2$, the regret  $R_N$ for the task \eqref{single} satisfies $\inf_{\cP} \sup_{\I} \E R_N =\Omega(d\sqrt{N}).$ {\hfill \QEDA}
\end{lemma}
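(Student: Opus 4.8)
Show that the minimax expected regret for the single‑task linear bandit \eqref{single}, with action set the unit ball and $N > d^2$, is $\Omega(d\sqrt N)$.

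The statement being proved is Lemma (Classical Lower Bound): $\inf_{\cP}\sup_{\I}\E R_N = \Omega(d\sqrt N)$. Since the paper says "existing studies have established" this, the natural move is to cite and reconstruct the standard information‑theoretic argument (Dani–Hayes–Kakade; Rusmevichientong–Tsitsiklis).

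My plan. First I would set up a hard instance family indexed by sign vectors. Fix a small parameter $\Delta>0$ (to be tuned as $\Delta \asymp d/\sqrt N$) and, for each $\epsilon\in\{-1,+1\}^d$, let $\theta_\epsilon = \frac{\Delta}{\sqrt d}\,\epsilon$, so that $\|\theta_\epsilon\|=\Delta$, comfortably inside the assumptions once $\Delta$ is small. For action $x$ in the unit ball, the instantaneous regret is $\max_{x'}\langle x',\theta_\epsilon\rangle - \langle x,\theta_\epsilon\rangle = \|\theta_\epsilon\| - \langle x,\theta_\epsilon\rangle$, and a short computation shows this is lower bounded, coordinate‑wise, by something proportional to $\frac{\Delta}{\sqrt d}\sum_{j}\mathbf{1}\{\operatorname{sign}(x_j)\neq \epsilon_j\}$ — i.e. the regret decomposes over coordinates and each coordinate behaves like an independent two‑point ($\pm$) testing problem scaled by $\Delta/\sqrt d$ per unit of "mass" the policy puts on that coordinate. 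This is the crucial structural reduction: the $d$‑dimensional ball problem splits into $d$ essentially independent scalar sign‑recovery problems.

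Second, I would apply a standard two‑point / Le Cam style bound to each coordinate. For a fixed coordinate $j$, flipping $\epsilon_j$ changes the reward distribution only through the mean $x_t^\top\theta_\epsilon$; over $N$ rounds the KL divergence between the two resulting trajectory laws is at most $\tfrac12 \sum_{t=1}^N \E\big[(2 x_{t,j}\Delta/\sqrt d)^2\big] \le 2 N \Delta^2/d$ (using $|x_{t,j}|\le 1$ and the $1$‑sub‑Gaussian, effectively unit‑variance, noise). By Pinsker / Bretagnolle–Huber, no test can identify $\epsilon_j$ with error probability much below a constant provided $N\Delta^2/d = \Theta(1)$, i.e. $\Delta = \Theta(\sqrt{d/N})$. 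With this choice the per‑coordinate expected regret contribution is $\Omega(\Delta/\sqrt d \cdot N \cdot \text{(const)})=\Omega(\sqrt{N/d})$ after accounting for how much the policy must "pay" either by exploring or by guessing wrong. Summing the $d$ coordinates and invoking the averaging (probabilistic) argument — $\sup_\epsilon \ge \text{average over }\epsilon$ under the uniform prior — yields $\E R_N = \Omega(d\sqrt{N/d}) = \Omega(d\sqrt N)$. The condition $N>d^2$ is exactly what keeps $\Delta=\Theta(\sqrt{d/N})$ a legitimately small perturbation (so $\Delta\le\phi_{\max}$ and the linearization of the regret is valid).

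Main obstacle. The delicate point is not any single inequality but making the per‑coordinate decomposition rigorous: the regret lower bound $\|\theta_\epsilon\|-\langle x,\theta_\epsilon\rangle \gtrsim \frac{\Delta}{\sqrt d}\sum_j \mathbf 1\{\operatorname{sign}(x_{t,j})\neq\epsilon_j\}$ must be checked carefully (it follows from $1-\langle x,\epsilon\rangle/\sqrt d \ge \frac{1}{\sqrt d}\sum_j(1-\operatorname{sign}(x_{t,j})\epsilon_j) \cdot \tfrac12$‑type bounds together with $\|x\|\le1$), and one must handle the coupling between the policy's adaptive choice of $x_t$ and the unknown $\epsilon$ correctly when bounding the KL divergence — the standard fix is to condition on the history and use the chain rule for KL, bounding each term by the squared mean gap. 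Since this is a known result, in the write‑up I would state the hard instance, quote the per‑coordinate KL bound and the Bretagnolle–Huber inequality, tune $\Delta\asymp\sqrt{d/N}$, and refer to \cite{RP-TJN:2010,Dani-Hayes-Kakade:2008} for the remaining routine details, noting that $N>d^2$ guarantees the construction stays within Assumption~\ref{Assump:1}.
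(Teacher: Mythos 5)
The paper does not actually prove this lemma---it is quoted from \cite{RP-TJN:2010,Dani-Hayes-Kakade:2008}---so the only question is whether your reconstruction of the standard argument is sound. The skeleton (sign-vector hypercube $\theta_\epsilon=\frac{\Delta}{\sqrt d}\epsilon$, per-coordinate two-point testing, averaging over the uniform prior) is the right one, but the quantitative execution has a genuine gap: as written it proves only $\Omega(\sqrt{dN})$, not $\Omega(d\sqrt N)$. Two specific problems. First, the coordinate decomposition constant is wrong: since the total instantaneous regret is $\Delta(1-\langle x,\epsilon\rangle/\sqrt d)\le 2\Delta$, it cannot dominate $\frac{\Delta}{\sqrt d}\sum_j\mathbf 1\{\operatorname{sign}(x_j)\ne\epsilon_j\}$ (take all $d$ signs wrong); the correct bound, via $1-\langle x,u\rangle\ge\frac12\|x-u\|^2$ and $|x_j-\epsilon_j/\sqrt d|\ge 1/\sqrt d$ on mismatched coordinates, is $\frac{\Delta}{2d}\sum_j\mathbf 1\{\cdot\}$. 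Second, and more importantly, your KL bound uses the crude estimate $|x_{t,j}|\le 1$, which forces $\Delta=\Theta(\sqrt{d/N})$ (contradicting the $\Delta\asymp d/\sqrt N$ you announce in the setup); with that $\Delta$ the per-coordinate regret is $\Theta(\frac{\Delta}{d}N)=\Theta(\sqrt{N/d})$ and the sum over $d$ coordinates is $\sqrt{dN}$. The final step $d\sqrt{N/d}=d\sqrt N$ is an arithmetic error masking this shortfall.

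The missing idea is precisely the one that distinguishes the unit ball from the hypercube action set: because $\sum_j x_{t,j}^2=\|x_t\|^2\le 1$, the \emph{aggregate} information about all $d$ coordinate signs satisfies $\sum_j \mathrm{KL}_j\le \frac{2\Delta^2}{d}\sum_t\E\|x_t\|^2\le \frac{2\Delta^2N}{d}$, so at least half the coordinates have $\mathrm{KL}_j\le 4\Delta^2N/d^2$. This lets you take $\Delta\asymp d/\sqrt N$ (an order $\sqrt d$ larger than your choice) while keeping a constant fraction of coordinates statistically undecidable, which yields per-coordinate regret $\Theta(\frac{\Delta}{d}N)=\Theta(\sqrt N)$ and total regret $\Omega(d\sqrt N)$. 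The hypothesis $N>d^2$ is then exactly what keeps $\Delta=\Theta(d/\sqrt N)$ bounded so the instances satisfy Assumption~\ref{Assump:1}. With these two corrections the proof goes through and matches the cited references.
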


This lemma indicates that there is a constant $c>0$ such that the expected regret incurred by any policy is no less than $cd\sqrt{N}$ for any $d\in\mathbb Z^{+}$ and $N>d^2$. Next, we show how some additional information on $\theta$ affects this lower bound. 

\begin{lemma}[\textbf{Lower Bound with a Representation}]\label{Lower:single_rep}
	Suppose there is a known matrix $B\in \R^{d\times r}$ with $r<d$ such that $\theta=B\alpha$ for some $\alpha \in \R^r$. Let $\mathcal P$ be the set of all policies, and $\mathcal I$ be the set of	all the possible tasks. Then, for any $d\in \mathbb{Z}^{+}$ and $N>d^2$, the regret  $R_N$ for the task \eqref{single} satisfies $\inf_{\cP} \sup_{\I} \E R_N =\Omega(r\sqrt{N}).$ {\hfill \QEDA}
\end{lemma}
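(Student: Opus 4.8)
The plan is to reduce the problem to the classical lower bound of Lemma~\ref{Lower:single} applied in dimension $r$ instead of $d$. The key observation is that knowing $B$ lets us restrict attention to the $r$-dimensional subspace $\Span(B)$ without loss of generality, both for the adversary choosing the task and for the learner choosing actions. Concretely, I would first argue that for any policy $\pi \in \cP$ acting in $\R^d$, there is an ``induced'' policy $\bar\pi$ that, whenever $\pi$ would play $x_t \in \mathcal A$, plays instead the projection $\Pi_{\Span(B)} x_t$ normalized to lie in the action set; since $\theta = B\alpha$ lies in $\Span(B)$, we have $x_t^\top \theta = (\Pi_{\Span(B)} x_t)^\top \theta$, so the reward (and hence the information available to the learner) is unchanged, while $\|\Pi_{\Span(B)} x_t\| \le \|x_t\| \le 1$ so the projected action is still feasible. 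Thus projecting onto $\Span(B)$ never hurts, and we may assume every action lies in $\Span(B)$.

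Next I would set up the change of coordinates. Writing $x_t = B u_t$ with $u_t \in \R^r$ and $\|u_t\| = \|x_t\| \le 1$ (using that $B$ has orthonormal columns), the model \eqref{single} becomes $y_t = u_t^\top \alpha + \eta_t$, which is exactly an instance of the single-task linear bandit \eqref{single} in dimension $r$ with unit-ball action set and $1$-sub-Gaussian noise. Moreover the instantaneous regret transforms as $(x_t^* - x_t)^\top \theta = (u_t^* - u_t)^\top \alpha$, so the total regret $R_N$ is preserved under this bijection. Since the adversary is free to choose $\alpha$ ranging over all of $\R^r$ (subject only to the norm bounds of Assumption~\ref{Assump:1}, which are exactly the standard assumptions under which Lemma~\ref{Lower:single} holds), the worst-case regret for the original problem with a known representation $B$ is bounded below by the worst-case regret of an $r$-dimensional linear bandit.

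Applying Lemma~\ref{Lower:single} with $d$ replaced by $r$ then gives $\inf_{\cP}\sup_{\I} \E R_N = \Omega(r\sqrt N)$, valid whenever $N > r^2$; since the hypothesis gives $N > d^2 > r^2$, this condition is automatically met. I expect the main subtlety to be making the ``projection is without loss of generality'' argument fully rigorous at the level of policies and information: one must verify that the learner's observations under $\pi$ and $\bar\pi$ have identical distributions (they do, since $y_t$ depends on $x_t$ only through $x_t^\top\theta$, which is invariant under the projection), so that no policy can extract extra information by playing actions with a component orthogonal to $\Span(B)$ — such a component contributes nothing to the reward and only shrinks the useful part of the action budget. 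The change-of-variables and the invocation of Lemma~\ref{Lower:single} are then routine.
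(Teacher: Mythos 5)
Your proposal is correct and follows essentially the same route as the paper: both reduce the problem to an $r$-dimensional linear bandit via the change of variables induced by $B$ (the paper writes $z_t = B^\top x_t$ so that $y_t = z_t^\top\alpha + \eta_t$) and then invoke the classical $\Omega(r\sqrt{N})$ lower bound in dimension $r$. Your write-up is in fact more careful than the paper's one-line argument, since you justify why restricting actions to $\Span(B)$ loses no generality and why the regret is preserved; the only nitpick is that the projected action needs no normalization (as you yourself note, its norm is already at most one, and rescaling would change $x_t^\top\theta$).
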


\begin{proof}
	Let $z_t=B^\top x_t$, and then the model in \eqref{single} becomes $y_t=z_t^\top \alpha+\eta_t$. As a consequence, the problem becomes to deal with a task with dimension $r$ instead of $d$. Following similar steps as in \cite{RP-TJN:2010}, it can be shown that the minimax lower bound for the regret is $\inf_{\cP} \sup_{\I} \E R_N =\Omega(r\sqrt{N})$, which completes the proof.
\end{proof}

Comparing Lemma~\ref{Lower:single_rep} with Lemma~\ref{Lower:single}, one finds that the regret lower bound decreases dramatically if $r\ll d$. This is because, with the knowledge of the representation $B\in\R^{d\times r}$, one does not need to explore the entire $\R^d$ space to learn the task coefficient $\theta$ for decision-making. Instead, one only needs to learn $\alpha$ by exploring a much lower-dimensional subspace $\Span(B)$ and estimate $\theta$ by $\hat \theta =B\hat \alpha$. As a consequence, $\theta$ can be learned much more efficiently, which helps the agent make better decisions at earlier stages.

Yet, such a representation $B$ is typically unknown beforehand. The agent usually needs to estimate $B$ from its experiences before utilizing it. In the next subsection, we show how to explore and transfer the representation in the setting of sequential tasks. 


\subsection{Representation learning in sequential tasks} 
Representation learning in the setting of sequential tasks is challenging, particularly when the agent has no knowledge of the number of sequential tasks that share the same representation. There is a trade-off between the need to explore more tasks to construct a more accurate estimate of the underlying representation and the incentive to exploit the learned representation for more efficient learning and higher instant rewards.

To investigate how to balance the trade-off, we consider that the agent plays $\tau$ tasks in sequence, i.e., $\mathcal T=\{\theta_1,\theta_2,\dots,\theta_\tau\}$,  without knowing the number of tasks $\tau$. There is an unknown matrix $B\in\R^{d \times r}$ such that for any $i$ it holds that $\theta_i=B\alpha_i$ for some $\alpha_i\in\R^r$. In this setting we aim to find a representation learning policy for each task subsequence in Fig.~\ref{Pro:setup}, i.e., a within-context policy.

We propose an algorithm, the sequential representation learning algorithm (SeqRepL, see Algorithm~\ref{alg:SeqRepL}), that alternates between two sub-algorithms -- representation exploration (RE) and representation transfer (RT) algorithms. Let us first elaborate on these two sub-algorithms, respectively.

\begin{algorithm}[tb]
	\caption{Representation Exploration (RE)}
	\label{alg:RE}
	\begin{algorithmic}
		\footnotesize
		\State \textbf{Input:}  Horizon N, exploration length $N_1=\ceil{d\sqrt{N}}$
		\State \textbf{for} {$t=1: N_1$} \textbf{do} \hspace{5pt}
		take $x_t= a_i$, $i=({t-1\mod d})+1$, where
		
		\State \hspace{65pt}  $[a_1,\dots,a_d]$ is any orthonormal basis of $\R^d$;
		
		\State compute $\hat \theta=(X_{\rm re} X_{\rm re}^\top)^{-1}X_{\rm re}Y_{\rm re}$, where $X_{\rm re}=[x_1,\dots,x_{N_1}]$, $Y_{\rm re}=[y_1,\dots,y_{N_1}]^\top$;
		\State \textbf{for} {$t=N_1+1: N$} \textbf{do}  \hspace{5pt} take $x_t= \argmax_{x \in \mathcal A} x^\top \hat \theta$ 
	\end{algorithmic}	
\end{algorithm}

\textbf{\textit{1) RE algorithm.}}  RE, shown in Algorithm~\ref{alg:RE}, is an explore-then-commit (ETC) algorithm, which contains two phases: \textit{exploration} and \textit{commitment}, consisting of $N_1$ and $N-N_1$ rounds, respectively.  The central goal of RE is to construct an accurate $\hat \theta_i$ for each task so that the collection of $\hat \theta_i$'s can recover an accurate representation $\hat B$ (which will be shown soon). Meanwhile, we want to ensure that the algorithm does not incur too much regret. To strike the balance, we set the exploration length $N_1=\ceil{d\sqrt{N}}$. The exploration phase is accomplished on the entire $\R^d$ space, in which $d$ linearly independent actions are repeatedly taken in sequence. Then, $\theta$ is estimated by the least-square regression $\hat \theta=(X_{\rm re} X_{\rm re}^\top)^{-1}X_{\rm re}Y_{\rm re}$, where $X_{\rm re}=[x_1,\dots,x_{N_1}]$, $Y_{\rm re}=[y_1,\dots,y_{N_1}]^\top$. In the commitment phase, the greedy action $x_t= \argmax_{x \in \mathcal A} x^\top \hat \theta$ is taken.  Note that the choice of $N_1$ ensures that the upper bound of RE is $O(d\sqrt{N})$, which {matches the lower bound} in Lemma~\ref{Lower:single}. The proof follows similar steps as those for Theorem 3.1 in \cite{RP-TJN:2010}. 

%

%

\begin{algorithm}[t]\caption{Representation Transfer (RT)}\label{alg:RT}
	\begin{algorithmic}
		\footnotesize
		\State \textbf{Input:} Horizon $N$, $\hat B \in \R^{d\times r}$, exploration length $N_2=\ceil{r\sqrt{N}}$
		\State \textbf{for} {$t=1: N_2$} \textbf{do}
		\hspace{5pt} take $x_i= a'_i$, $i=({t-1\mod  r})+1$, where
		
		\State \hspace{65pt}  $[a'_1,\dots,a'_r]$ is any orthonormal basis of $\Span(\hat B)$;
		
		\State compute 
		$\hat \alpha=(\hat B^\top X_{\rm rt}X_{\rm rt}^\top \hat B)^{-1}\hat B^\top X_{\rm rt}Y_{\rm rt}$ and $\hat \theta= \hat B \hat \alpha$, where $X_{\rm rt}=[x_1,x_2,\dots,x_{N_2}]$ and $Y_{\rm rt}=[y_1,y_2,\dots,y_{N_2}]^\top$;
		\State \textbf{for} {$t=N_2+1: N$}  \textbf{do}
		\hspace{5pt} take $x_t= \argmax_{x \in \mathcal A} x^\top \hat \theta$ \hspace{5pt}	
	\end{algorithmic}
\end{algorithm}

\textbf{\textit{2) RT algorithm.}} RT, shown in Algorithm~\ref{alg:RT}, is also an ETC algorithm. Its key feature is the utilization of $\hat B$. Thanks to $\hat B$, the exploration phase of RT is just carried out in the $r$-dimensional subspace $\Span(\hat B)$. Consequently, much fewer exploration rounds are required ($N_2=\ceil{r\sqrt{N}}$ rather than $\ceil{d\sqrt{N}}$). In the exploration phase, $r$ linear independently actions in $\Span(\hat B)$ are repeatedly taken before the $N_2$ rounds are exhausted. Unlike RE wherein $\hat \theta$ is directly constructed, RT first estimates $\alpha$ by the least-square regression $\hat \alpha=(\hat B^\top X_{\rm rt}X_{\rm rt}^\top \hat B)^{-1}\hat B^\top X_{\rm rt}Y_{\rm rt}$ with $X_{\rm rt}=[x_1,x_2,\dots,x_{N_2}]$ and $Y_{\rm rt}=[y_1,y_2,\dots,y_{N_2}]^\top$, and then recovers $\theta$ by $\hat B \hat\alpha$. 


With a perfect estimate $\hat B=B$, RT can achieve a regret upper bounded by $O(r\sqrt{N})$. This can be proven straightforwardly since the original  model can be rewritten into a $r$-dimensional one $y_t=z_t^\top \alpha+\eta_t$ by letting $z_t=\hat B^\top x_t$.  Yet, constructing a perfect $\hat B$ is usually impossible given the noisy environment. The next theorem provides an upper bound for the regret of RT when there is some error between $\hat B$ and $B$.

\begin{theorem}[Upper Bound Given an Estimated Representation]\label{transfer:single}
	Assume that an estimate $\hat B$ of the true representation $B$ satisfies $\|\hat B^\top B_\perp\|_F \le \varepsilon$. If the agent plays the task described by Eq.~\eqref{single} for $N$ rounds using  Algorithm~\ref{alg:RT} with $\hat B$, then the regret satisfies $\E R_N = O( r \sqrt{N} + N \varepsilon^2 )$. {\hfill \QEDA}
\end{theorem}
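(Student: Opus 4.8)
The plan is to decompose the regret into the contribution of the exploration phase (the first $N_2 = \ceil{r\sqrt N}$ rounds) and that of the commitment phase (the remaining $N - N_2$ rounds), and to bound each separately. For the exploration phase, every per-round regret is bounded by a constant under Assumption~\ref{Assump:1} (both $\|x_t\|\le 1$ and $\|\theta\|\le \phi_{\max}$), so this phase contributes $O(N_2) = O(r\sqrt N)$; this piece is routine. The substance of the argument is the commitment phase, where I need to control $(x^*-\hat x)^\top\theta$ with $\hat x = \argmax_{x\in\mathcal A} x^\top\hat\theta$ and $\hat\theta = \hat B\hat\alpha$. Since $\mathcal A$ is the unit ball, $x^* = \theta/\|\theta\|$ and $\hat x = \hat\theta/\|\hat\theta\|$, so $(x^*-\hat x)^\top\theta = \|\theta\| - \hat\theta^\top\theta/\|\hat\theta\| \le \|\theta\|\cdot\bigl(1 - \cos\angle(\theta,\hat\theta)\bigr) \le \tfrac{\phi_{\max}}{2}\,\|\theta/\|\theta\| - \hat\theta/\|\hat\theta\|\|^2$, which I will further bound by $O(\|\theta - \hat\theta\|^2/\phi_{\min}^2)$ using $\|\theta\|\ge\phi_{\min}$. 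Hence the commitment phase contributes $O\bigl(N\,\|\hat\theta - \theta\|^2\bigr)$, and the whole problem reduces to a high-probability estimation bound on $\|\hat\theta - \theta\|$.

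Next I would analyze the estimation error $\hat\theta - \theta = \hat B\hat\alpha - \theta$. Write $\theta = B\alpha$ and split it along the estimated subspace and its complement: $\theta = \hat B\hat B^\top\theta + \hat B_\perp \hat B_\perp^\top\theta$. The orthogonal part has norm $\|\hat B_\perp^\top B\alpha\|$; here I use the hypothesis $\|\hat B^\top B_\perp\|_F \le \varepsilon$, which (after noting $\|\hat B_\perp^\top B\|_F = \|\hat B^\top B_\perp\|_F$ for the orthonormal complements, a standard principal-angle identity) gives $\|\hat B_\perp^\top\theta\| \le \varepsilon\,\phi_{\max}$, contributing an $O(\varepsilon)$ bias term. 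For the in-subspace part, the least-squares estimator in the exploration phase is $\hat\alpha = (\hat B^\top X_{\rm rt}X_{\rm rt}^\top\hat B)^{-1}\hat B^\top X_{\rm rt}Y_{\rm rt}$ with $Y_{\rm rt}$ generated by $y_t = x_t^\top\theta + \eta_t$. Substituting $\theta = \hat B\hat B^\top\theta + (\text{orthogonal remainder})$ splits $\hat\alpha$ into (i) the projection $\hat B^\top\theta$ we want to recover, (ii) a term driven by the orthogonal remainder of $\theta$ passed through the regression (again $O(\varepsilon)$ after bounding the relevant operator norms), and (iii) a pure noise term $(\hat B^\top X_{\rm rt}X_{\rm rt}^\top\hat B)^{-1}\hat B^\top X_{\rm rt}\,\eta$. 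Because the $N_2$ exploration actions cycle through an orthonormal basis $a'_1,\dots,a'_r$ of $\Span(\hat B)$, the Gram matrix $\hat B^\top X_{\rm rt}X_{\rm rt}^\top\hat B$ is (up to rounding from $N_2$ not being a multiple of $r$) $\lfloor N_2/r\rfloor\, I_r$, so its inverse has norm $O(1/(N_2/r)) = O(1/\sqrt N)$; combined with a standard sub-Gaussian tail bound on $\|\hat B^\top X_{\rm rt}\eta\| = O(\sqrt{N_2\cdot r\log(1/\delta)})$ with probability $1-\delta$, the noise term is $\tilde O(\sqrt{r/\sqrt N})$ in norm, i.e.\ $\|\hat\theta-\theta\|^2 = \tilde O(r/\sqrt N + \varepsilon^2)$.

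Putting the pieces together, the commitment-phase regret is $O\!\bigl(N\cdot(\,r/\sqrt N + \varepsilon^2)\bigr) = O(r\sqrt N + N\varepsilon^2)$ up to logarithmic factors, which dominates the $O(r\sqrt N)$ exploration cost; taking expectations (and handling the low-probability bad event, on which the regret is at most $O(N)$, by choosing $\delta = 1/N$) yields $\E R_N = O(r\sqrt N + N\varepsilon^2)$. I expect the main obstacle to be the clean bookkeeping of the $\varepsilon$-dependence through the least-squares estimator: one must verify that each place where the orthogonal remainder $\hat B_\perp\hat B_\perp^\top\theta$ enters — both directly in $\hat\theta$ and after being filtered through $(\hat B^\top X_{\rm rt}X_{\rm rt}^\top\hat B)^{-1}\hat B^\top X_{\rm rt}X_{\rm rt}^\top$ — contributes only $O(\varepsilon)$ and not something worse, which relies on the exploration design keeping the relevant operator norms $O(1)$. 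The noise and exploration-cost parts are otherwise standard ETC analysis, essentially the $r$-dimensional specialization of the argument behind Lemma~\ref{Lower:single_rep} and the RE bound cited from \cite{RP-TJN:2010}.
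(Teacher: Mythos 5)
Your proposal is correct and follows essentially the same route as the paper's proof: an explore-then-commit decomposition, a bias--variance split of the least-squares estimate $\hat\theta=\hat B\hat\alpha$ into an $O(\varepsilon)$ misalignment term and an $\tilde O(\sqrt{r/\sqrt N})$ noise term using the Gram matrix $\hat B^\top X_{\rm rt}X_{\rm rt}^\top\hat B=(N_2/r)I_r$, and a per-round commitment regret of order $\|\hat\theta-\theta\|^2/\phi_{\min}$. The only cosmetic difference is that you control the noise term via a high-probability bound with $\delta=1/N$ where the paper bounds $\E\|s_2\|^2$ directly in expectation; both yield the stated $O(r\sqrt N+N\varepsilon^2)$.
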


The upper bound in Theorem~\ref{transfer:single} is less than the lower bound $\Omega(d\sqrt{N})$ in Lemma~\ref{Lower:single} if $\varepsilon< \sqrt{d}/N^{\frac{1}{4}}$. This implies that the knowledge of an imperfect estimate of the representation improves the performance as long as it is sufficiently accurate (i.e., small $\|\hat B^\top B_\perp\|_F$).

\begin{pfof}{Theorem~\ref{transfer:single}}
	Since $\theta=B\alpha$, then the model becomes $y_t= x_t^\top B\alpha+\eta_t$. From Algorithm~\ref{alg:RT}, it holds that
	$
		\hat \alpha = (\hat B^\top X_{rt}X_{rt}^\top \hat B)^{-1} \hat B^\top X_{rt}Y_{rt}.
	$
	Without loss of generality, we assume $N_2$ is a multiple of $r$. Then, it can be calculated that $X_{rt}X_{rt}^\top = \frac{N_2}{ r} AA^\top$ with $A= [ a'_1,\dots, a'_{ r}]$, then we have 
	$
		\hat \alpha = \big( {N_2}\hat B^\top AA^\top \hat B/ { r}\big)^{-1} \hat B^\top X_{rt}Y_{rt}.
	$
	As $Y_{rt}=X_{rt}^\top B \alpha +\eta$ with $\eta=[\eta_1,\dots,\eta_{N_2}]^\top$, we have 
	$
		\hat \alpha  =\big( {N_2}\hat B^\top AA^\top \hat B /{ r}\big)^{-1} {N_2}\hat B^\top AA^\top  B \alpha/{ r} +  \big( \frac{N_2}{ r}\hat B^\top AA^\top \hat B \big)^{-1}  \hat B^\top X_{rt} \eta.
	$
	As $\hat \theta=\hat B \hat \alpha$ and $\theta= B \alpha$, it follows that
	\begin{align*}
		\hat B \hat \alpha-B \alpha=&\underbrace{\hat B \big( {N_2}\hat B^\top AA^\top \hat B/{r} \big)^{-1} {N_2}\hat B^\top AA^\top B\alpha/r -B\alpha }_{s_1}\\
		&+\underbrace{\hat B \big( {N_2}\hat B^\top AA^\top \hat B /r\big)^{-1}  \hat B^\top X \eta}_{s_2}.
	\end{align*}
	Then, it holds that $\BE \left[ \|\hat \theta(c) -\theta\|^2 \right] \le \BE \|s_1\|^2 + \BE \|s_2\|^2$ since $\eta_t$ is an independent random variable with zero mean. It can be derived (more details can be found in the extended version of this paper \cite{QY=MT-OS-CS-PF:2022(arxiv)}) that $\BE \|s_1\|^2 \le 2 c \phi_{\max}^2 \varepsilon^2$ for some constant $c$ and $\BE \|s_2\|^2 \le { r }/{\sqrt{N}}$. Combining $\BE \|s_1\|^2$ and $\BE \|s_2\|^2$, we have  $\BE \left[ \|\hat \theta -\theta\|^2 \right] \le {  r }/{ \sqrt{N}} +2 c\phi_{\max}^2 \varepsilon^2$.
	
	From \cite{RP-TJN:2010}, it follows that
	\begin{align}\label{intermediate:1}
		\BE[\max_{x\in \mathcal A}& x^\top \theta -\max_{x\in \mathcal A}x^\top \hat \theta] \nonumber\\
		&\le J\frac{\hat r }{ \phi_{\min}\sqrt{N}} +2\frac{1}{\phi_{\min}} J\phi_{\max}^2(1+\mu) \varepsilon^2.
	\end{align}
	For the commitment phase, there are $N-N_2$ steps. Thus, the overall regret satisfies $\BE R_N \le N_2 \phi_{\max} + (N-N_2) \left( \max_{x\in \mathcal A} x^\top \theta -\BE \max_{x\in \mathcal A}x^\top \hat \theta \right)$.  Substituting Eq.~\eqref{intermediate:1} into the right-hand side we obtain $\BE R_N\le O(\hat r \sqrt{N} +N \varepsilon^2)$, which completes the proof.
\end{pfof}

\textbf{\textit{3) SeqRepL algorithm.}} Let us now present the main algorithm in this subsection, which performs sequential representation learning (SeqRepL). It operates in a cyclic manner, alternating between RE and RT (see Algorithm~\ref{alg:SeqRepL}). In each cycle, there are two phases. In the RE phase of the $n$th cycle, $L$ tasks are played using RE. Then, the representation are estimated. Specifically, let $\hat P=\sum \hat \theta_i \hat \theta_i^\top$, where $\hat \theta_i$'s are the learned coefficients in all the previous $n$ RE phases. Then, $\hat B$ is constructed by performing \textit{singular value decomposition} (SVD) to $\hat P$ in the following way:
\begin{align*}
	{\rm SVD:}\hspace{2pt}\hat P = [U_1, U_2]\Sigma V \hspace{12pt} \longrightarrow \hspace{12pt} \hat B= U_1,
\end{align*}
where the columns of $U_1\in \R^{d\times r}$ are the singular vectors that are associated with the $r$-largest singular values of $\hat P$. In the RT phase, $nL$ tasks are played using RT with the estimated $\hat B$. Notice that $L$ more tasks are played using RT in each cycle than the previous one. This alternating scheme balances representation exploration and transfer well. 

\begin{algorithm}[t]\caption{Sequential Representation Learning (SeqRepL)}\label{alg:SeqRepL}
	\begin{algorithmic}
		\footnotesize
		\State \textbf{Input:} $\mathcal S_{\tau}=\{\theta_1,\dots,\theta_\tau\}$, $L=c_1 r$, $\hat P=0_{d\times d}$ \hspace{5pt} \textbf{Initialize}: $n=1$;
		\State \textbf{for each cycle $n$:} 
		\State \hspace{10pt} \textbf{RE phase:} play $L$ tasks in $\mathcal S_{\tau}$ using RE algorithm, $\hat P=\hat P+\hat\theta_i \hat \theta_i^\top$,
		\State \hspace{48pt}$\hat B \leftarrow$ top $r$	singular vector of $\hat P$;
		\State 	\hspace{10pt} \textbf{RT phase:} play $nL$ tasks in $\mathcal S_{\tau}$ using RT algorithm with latest $\hat B$;
		\State update $n=n+1$.
	\end{algorithmic}
\end{algorithm}

Next, we make an assumption and provide an upper bound for SeqRepL.

\begin{assumption}\label{Assump:diversity}
		For the task sequence $\mathcal T=\{\theta_1,\dots,\theta_{\tau}\}$, suppose that there exists $L =c_1 r$ for some constant $c_1>0$ such that any subsequence of length $L$ in $\mathcal T$ satisfies $\sigma_{r}(W_{s}W_{s}^\top)\ge \nu>0$ for any $s$, where $W_{s}=[\theta_{s+1}, \dots,\theta_{s+\ell}]$ and $\sigma_r(\cdot)$ denote the $r$th largest singular value of a matrix. {\hfill \QEDA}
\end{assumption}

This assumption states that the sequential tasks covers all the directions of the $r$-dimensional subspace  $\Span(B)$, which ensures that $B$ can be recovered in a sequential fashion.

\begin{theorem}[Upper Bound of SeqRepL]\label{regret:seq:single_B}
	Let the agent play a series of tasks $\{\theta_1,\theta_2,\dots,\theta_\tau\}$ using SeqRepL in Algorithm~\ref{alg:SeqRepL}, where $\tau>r^2$. Suppose that Assumption  \ref{Assump:diversity} is satisfied, then the regret, denoted by $R_{\tau N}$, satisfies
	$
		\E R_{\tau N}= \tilde O\left( d  \sqrt{\tau r N}+  \tau r \sqrt{N} \right). 
	$ {\QEDA}
\end{theorem}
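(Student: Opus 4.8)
The plan is to split $R_{\tau N}$ along the cyclic structure of SeqRepL, writing $R_{\tau N}=R^{\mathrm{RE}}+R^{\mathrm{RT}}$, where $R^{\mathrm{RE}}$ collects the regret of every task played by Algorithm~\ref{alg:RE} and $R^{\mathrm{RT}}$ that of every task played by Algorithm~\ref{alg:RT}. First I would count cycles: cycle $n$ plays $(n+1)L$ tasks, so after $n$ cycles a total of $\Theta(Ln^2)$ tasks have been played; hence exhausting the $\tau$ tasks requires $n^{\ast}=\Theta(\sqrt{\tau/L})=\Theta(\sqrt{\tau/r})$ cycles (recall $L=c_1 r$). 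Consequently the RE phases play $n^{\ast}L=\Theta(\sqrt{\tau r})$ tasks in total, and the RT phases play $\Theta(\tau)$ tasks in total.

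The RE contribution is immediate: each task run under Algorithm~\ref{alg:RE} incurs expected regret $O(d\sqrt N)$ (the guarantee claimed for RE, which matches the lower bound of Lemma~\ref{Lower:single}), so $\E R^{\mathrm{RE}}=O(\sqrt{\tau r}\cdot d\sqrt N)=\tilde O(d\sqrt{\tau r N})$, the first term. For the RT contribution I would invoke Theorem~\ref{transfer:single}: letting $\hat B_n$ denote the estimate used in the RT phase of cycle $n$ and $\varepsilon_n:=\|\hat B_n^{\top}B_\perp\|_F$, that phase contributes expected regret $(nL)\cdot O(r\sqrt N+N\varepsilon_n^2)$. Here one conditions on $\hat B_n$, which is a function of the RE-phase noise only and hence independent of the RT-phase noise, so the bound of Theorem~\ref{transfer:single} applies conditionally and then in expectation since it is linear in $\varepsilon^2$. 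Summing the $r\sqrt N$ part over $n\le n^{\ast}$ gives $O(r\sqrt N)\cdot\sum_n nL=O(r\sqrt N)\cdot\Theta\!\big(L(n^{\ast})^2\big)=\tilde O(\tau r\sqrt N)$, the second term.

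The crux is therefore a bound on $\varepsilon_n$. Recall $\hat B_n$ is the top-$r$ left singular subspace of $\hat P_n=\sum_{i=1}^{nL}\hat\theta_i\hat\theta_i^{\top}$, with $\hat\theta_i$ the RE estimates of the first $nL$ tasks. I would compare $\hat P_n$ with the population matrix $P_n:=\sum_{i=1}^{nL}\theta_i\theta_i^{\top}$. Since every $\theta_i\in\Span(B)$, $P_n$ has rank $r$ with range $\Span(B)$, and applying Assumption~\ref{Assump:diversity} to the $n$ consecutive length-$L$ blocks gives $\sigma_r(P_n)\ge n\nu$ while $\sigma_{r+1}(P_n)=0$, so the relevant eigengap grows linearly in $n$. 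The errors $e_i:=\hat\theta_i-\theta_i$ from the least-squares step of RE are independent across tasks, zero-mean, and satisfy $\E\|e_i\|^2=O(d/\sqrt N)$ (the residual of averaging $\sqrt N$ noisy samples in each of $d$ directions). Expanding $\hat P_n-P_n=\sum_i(\theta_i e_i^{\top}+e_i\theta_i^{\top})+\sum_i e_i e_i^{\top}$, the cross terms are zero-mean with pairwise-uncorrelated summands, so $\E\|\sum_i(\theta_i e_i^{\top}+e_i\theta_i^{\top})\|_F^2=O(nL\,\phi_{\max}^2\,d/\sqrt N)$, i.e.\ the noise in $\hat P_n$ grows only like $\sqrt n$; the quadratic term $\sum_i e_i e_i^{\top}$ can be shown to be of lower order in the relevant large-$N$ regime, e.g.\ via a concentration bound on $\|[e_1,\dots,e_{nL}]\|_{\mathrm{op}}$. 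A Davis--Kahan/Wedin perturbation bound then yields $\E\varepsilon_n^2\le \E\|\hat P_n-P_n\|_F^2/\sigma_r(P_n)^2=\tilde O\!\big(dr/(n\sqrt N)\big)$: the representation estimate sharpens as cycles accumulate. I expect this to be the main obstacle — one must play the linear growth of the eigengap (Assumption~\ref{Assump:diversity}) against the $\sqrt n$ growth of the perturbation to extract the $1/n$ decay, and separately verify the quadratic noise term does not dominate; the cross-task independence of the RE noise is what makes this work.

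Finally I would assemble: $\E R^{\mathrm{RT}}=\sum_{n=1}^{n^{\ast}}(nL)\cdot O(r\sqrt N+N\,\E\varepsilon_n^2)=\tilde O(\tau r\sqrt N)+\sum_{n=1}^{n^{\ast}}O(L\,r\,d\sqrt N)=\tilde O(\tau r\sqrt N)+\tilde O\!\big(n^{\ast}L\,r\,d\sqrt N\big)$, and since $n^{\ast}L=\Theta(\sqrt{\tau r})$ the last term is of order $\tilde O(d\sqrt{\tau r N})$ once the intrinsic-dimension and diversity constants $r,\phi_{\max},\nu,c_1$ are treated as fixed, so it merges with $\E R^{\mathrm{RE}}$ to give $\E R_{\tau N}=\tilde O(d\sqrt{\tau r N}+\tau r\sqrt N)$. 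The remaining work is the bookkeeping that keeps the cross and quadratic perturbation terms below these two displayed terms, for which the hypotheses $\tau>r^2$ and $N$ large are used; I would carry this out at the end, and a tighter rank-$r$ perturbation argument would be what is needed to make the $r$-dependence of the first term sharp.
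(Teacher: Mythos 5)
Your proposal follows essentially the same route as the paper's proof: decompose the regret cycle-by-cycle into RE and RT contributions, count $\Theta(\sqrt{\tau/L})$ cycles (the paper uses $\bar L=\lceil\sqrt{2\tau/L}\rceil$ from $L\bar L+L\bar L(\bar L+1)/2\ge\tau$, matching your $\Theta(Ln^2)$ task count), charge each RE task $O(d\sqrt N)$ to get the $d\sqrt{\tau rN}$ term, and feed a Davis--Kahan/Wedin bound on $\varepsilon_n=\|\hat B_n^\top B_\perp\|_F$ --- the eigengap $\sigma_r(P_n)\ge n\nu$ from Assumption~\ref{Assump:diversity} played against the accumulated estimation noise --- into Theorem~\ref{transfer:single} for the RT phases. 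The subspace-perturbation bound you identify as the crux is precisely the step the paper also treats as the crux, stating $\|\hat B^\top B_\perp\|_F=\tilde O\big(\tfrac{d}{\nu}\sqrt{1/(nLd\sqrt N)}\big)$ and deferring its derivation to the extended version. Your observation that $\hat B_n$ is measurable with respect to the RE-phase noise only, so that Theorem~\ref{transfer:single} applies conditionally and then in expectation, is a point the paper glosses over and is worth making.

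The one substantive discrepancy is quantitative. Your Frobenius-norm perturbation argument yields $\E\varepsilon_n^2=\tilde O\big(Ld/(n\nu^2\sqrt N)\big)=\tilde O\big(dr/(n\sqrt N)\big)$, whereas the paper's claimed intermediate bound amounts to $\varepsilon_n^2=\tilde O\big(d/(\nu^2nL\sqrt N)\big)=\tilde O\big(d/(nr\sqrt N)\big)$ --- sharper by a factor of $L^2=\Theta(r^2)$. Propagated through the per-cycle term $nLN\,\E\varepsilon_n^2$ and summed over $\Theta(\sqrt{\tau/r})$ cycles, your version leaves a contribution of order $rd\sqrt{\tau rN}$ where the paper gets $d\sqrt{\tau N/r}$ (which is dominated by $d\sqrt{\tau rN}$). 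You cannot legitimately absorb the surplus by ``treating $r$ as fixed,'' since $r$ is an explicit parameter of the stated bound; as written, your argument proves $\tilde O\big(rd\sqrt{\tau rN}+\tau r\sqrt N\big)$ rather than the theorem. You flag this honestly, and the fix is exactly the tighter rank-$r$ perturbation argument you anticipate --- but note that the paper does not supply that argument in this version either, so on the genuinely hard step your proposal and the paper are in the same position.
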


Note that if one uses a standard algorithm, e.g., a UCB algorithm \cite{Dani-Hayes-Kakade:2008} or a PEGE algorithm \cite{RP-TJN:2010}, to play the sequence of tasks without learning the representation, the optimal regret would be  $\Theta(\tau d\sqrt{N})$. This bound is always larger than the two terms in our bound since $\tau>r^2$. This indicates that our algorithm outperforms the standard algorithms that do not learn the representations.




\begin{pfof}{Theorem~\ref{regret:seq:single_B}}
	After the RE phase of $n$th cycle in the SeqRepL algorithm, it can be derived (more details can be found in the extended version of this paper \cite{QY=MT-OS-CS-PF:2022(arxiv)})  that the estimate $\hat B$  and the true representation $B$ satisfy
	$
		\|\hat B^\top B_\perp\|_F= \tilde O \Big( \frac{d}{ \nu}{\sqrt{\frac{1}{ nL  d\sqrt{N}}}}  \Big).
	$
	The regret incurred in this phase of the $n$th cycle, denoted by $R_{\rm RE}(n)$, satisfies $R_{\rm RE}=O(Ld\sqrt{N})$
	Then, $n L$ tasks are played in sequence utilizing the RT algorithm with input $\hat B$.  It follows from Lemma~\ref{transfer:single} that the regret in the RT phase of the $n$th cycle, denoted as $R_{\rm RT} (n)$, satisfies
	$
		\BE R_{\rm RT} (n) \lesssim   n L r \sqrt{N} + n L N \frac{d^2 }{ \nu^2}{\frac{1}{ n L  d\sqrt{N}}}
		 =  \tilde O( n L  r \sqrt{N} +d \sqrt{N}).
	$
	Observe that there are at most $ \bar L=\lceil\sqrt{{2\tau}/{L}}\rceil$ cycles in the sequence of length $\tau$ since $L \bar L+L \bar L(\bar L+1)/2 \ge \tau$. 	Summing up the regret in Phases 1 and 2 in every cycle, we obtain
	$
		\BE R_{N\tau} \lesssim  \bar L L d\sqrt{N} +\sum_{m=1}^{\bar L} \left(  n L r \sqrt{N} + d \sqrt{N}\right)
		 \le \bar L L d\sqrt{N} + \tau r \sqrt{N}  +\bar L  d \sqrt{N}.
	$
	Since $L=c_1r$ for some constant $c_1$ and $\bar L=\lceil\sqrt{{2\tau}/{L}}\rceil$, then
	$
		\E R_{\tau N} = \sum_{n=1}^{\bar L} R_{\rm RE} (n) + R_{\rm RT} (n) = \tilde O \left( d  \sqrt{\tau r N}+  \tau r_i \sqrt{N} + d \sqrt{{\tau N}/{r}}  \right),
	$
	which completes the proof.
\end{pfof}

\subsection{Representation learning with contextual changes}
Finally, we are ready to address the problem that we set up in Section~\ref{ProForm}, i.e., representation learning in sequential tasks with changing contexts. 

\begin{algorithm}[t]
	\caption{Outlier Detection (OD)}
	\label{alg:OD}
	\begin{algorithmic}
		\footnotesize
		\State \textbf{Input:} $\hat B \in \R^{d\times r}$, $n_{\rm od}$, generate a random orthonormal matrix $Q \in\R^{(d- r)\times n_{\rm od}}$, and let $M=\hat B_\perp Q$.
		\State \textbf{for} {$t=1,\dots,n_{\rm od}$} \textbf{do} \hspace{5pt} $x_t= \delta [M]_t\in \mathcal A$, collect $y_t$
		\hspace{8pt} \textbf{end for}
		\State \textbf{if} {$Y_{n_{\rm od}}\notin \mathcal C_{n_{\rm od}}$} \textbf{then} \hspace{5pt} outlier indicator $\mathbb I_{\rm od}=1$ \hspace{8pt} \textbf{end if}
		\
	\end{algorithmic}
\end{algorithm}

\begin{algorithm}[t]
	\caption{Adaptive Representation Learning (AdaRepL)}
	\label{alg:main}
	\begin{algorithmic}
		\footnotesize
		\State \textbf{Input:} $k_c$ \hspace{8pt} \textbf{Initialize:}  $n_c=0$ (outlier counter), $\hat B=I_d$
		\State \textbf{for}  {$\theta_1,\theta_2,\dots,\theta_S$} \hspace{5pt} \textbf{do:}
		\State	\hspace{10 pt}invoke OD algorithm, return $\mathbb I_{\rm od}$
		\State \hspace{10 pt}\textbf{if} {$\mathbb I_{\rm od}=1$} \textbf{do} \hspace{5pt} invoke RE algorithm, $P=P+\hat \theta_i \theta_i^\top$, $n_c=n_c+1$ 
		\State \hspace{10 pt}\textbf{else} \hspace{5pt} $n_c=0$, invoke the cyclic SeqRepL
		\State \hspace{10 pt}\textbf{end if}
		\State \hspace{10 pt}\textbf{if} $n_c=k_c$ \textbf{do} \hspace{5pt} restart SeqRepL \hspace{8pt} \textbf{end if}			
	\end{algorithmic}
\end{algorithm}

In the WCST, humans are able to realize of sorting rule changes quickly. Inspired by that, we equip our algorithm with the ability to detect context switches, which enables it to adapt to new environments.

As shown in Algorithm~\ref{alg:OD}, the key idea is to take $n_{\rm od}$ probing actions for every new task. These actions are randomly generated in the perpendicular complement of $\Span(\hat B)$. Specifically, we generate a random orthonormal matrix $Q\in \R^{(d-r)\times n_{\rm od}}$. The probing actions are taken from the columns of the matrix $M=\delta \hat B_\perp Q$, where $\delta>0$ ensures that the actions are within the action set $\mathcal A$. If the current task $\theta$ satisfies $\theta=\hat B\alpha$ for some $\alpha$, it holds that $y_t=x_t^\top \theta  +\eta_t=\eta_t$ since $Q^\top \hat B_\perp ^\top \hat B \alpha=0$. Therefore, if the received rewards considerably deviate from the level of noise, the new task is an outlier to the current context  (i.e., a task that does not lie in the subspace $\Span(B)$) with high probability.

Let $Y_{n_{\rm od}}=[y_1,\dots,y_{n_{\rm rsd}}]^\top$ collect the rewards. Also, we build a confidence interval for $Y_{n_{\rm od}}$, which is 
$	\mathcal C_{n_{\rm od}} = \left\{ Y_{\rm od}\in \R^{n_{\rm od}}:  \left| \|Y\|_2 - \sqrt{n_{\rm od}}  \right|\le \xi_{\rm od}\right\}$,
where $\xi_{\rm od}$ is the detection threshold chosen by the agent. If the observed $Y_{\rm od}$ is beyond $\mathcal C_{n_{\rm od}}$, we decide that the new task is an outlier.

The main algorithm in this paper, which we call \textit{Adaptive Representation Learning algorithm} (AdaRepL), is provided in Algorithm~\ref{alg:main}, which invokes both SeqRepL and OD sub-algorithms. The former well balances representation exploration and transfer in the sequential setting, and the latter enables the algorithm to adapt to changing environments. To make our algorithm robust to occasional outliers, we set a threshold $k_c$ so that the algorithm considers that a context switch has occurred only when $k_c$ outliers have been detected consecutively.

It is worth mentioning that with the aid of the OD algorithm, the agent can detect context changes with high probability by properly selecting the detection threshold $\xi_{\rm od}$ and the length of probing actions $n_{\rm od}$. Within each context, the regret of AdaRepL has an upper bound presented in Theorem~\ref{regret:seq:single_B}. Although context change detection incurs some regret, the overall performance will still surpass the standard algorithms that are unable to learn representations adaptively. We will verify this point in the next section by revisiting the WCST.

\section{Experimental Study of WCST}\label{simulation}

First, we provide more details on the tabular-Q learning and Deep-Q learning algorithms in  Fig.~\ref{conceptual}. We assume that the agent receives reward 1 if it takes the classification action $x_t$ satisfies $x_t=\theta_t$, otherwise, it receives reward 0.

For the tabular-Q learning, the problem is to construct the $4^3\times 4$ Q table. This is because there are $4^3$ possible stimulus cards (4 colors, 4 numbers, 4 shapes) and each stimulus card can be taken as a state, and there are 4 sorting actions.
 
For the Deep-Q learning, we formalize each input state by a 3-dimension vector $\rm (shape,number,color)^\top\in \{1,2,3,4\}^3$. The result shown in  Fig.~\ref{conceptual} is based on a three-layer network with 3, 12, and 4 nodes in the input, hidden, and output layers, respectively. We also considered deeper or wider structures but obtained similar performances. 

It can be observed from Fig.~\ref{conceptual} that these two algorithms struggle in the WCST. The reason is that a large number of samples (certainly more than $4^4$ samples) are needed to construct the Q table or train the network weights. However, if the sorting rule changes much earlier than $4^4$ rounds, it is impossible to find the optimal policy. Also, being unaware of the sorting rule changes worsens the performance.



Next, we demonstrate how our proposed algorithm, which explore and exploit the representation in the WCST and detect sorting rule changes, has a much better performance.

\begin{figure}[t]
	\centering
	\includegraphics[scale=1]{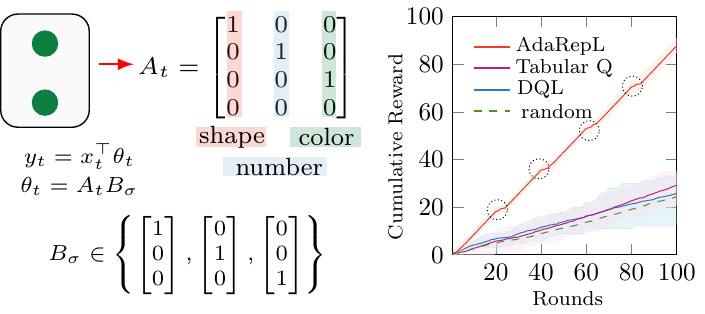}
	\caption{Left: key steps to model WCST into a sequential decision-making model with linear reward functions. Right: performance comparison between our algorithm and standard RL algorithms in WCST. Sorting rules change every 20 rounds. Dotted circles indicate that our algorithm is able to adapt to new contexts and learn new representations quickly.}
	\label{WCST:comparison}
\end{figure}

To do that, we model the WCST into a sequential decision-making model.  Specifically, we use a matrix $A_t \in \R^{4\times 3}$ to describe the stimulus card at round $t$. The first, second, and third columns of $A_t$ represent shape, number, and color, respectively, and they take values from the set $\{e_1,e_2,e_3,e_4\}$ with $e_i$ being the $i$th standard basis of $\R^4$. In each column, $e_i$ indicates that this card has the same shape/number/color as the $i$th card on table (see Fig.~\ref{WCST:comparison}). For example, the stimulus card (with two green circles) in Fig.~\ref{conceptual}  can be represented by the matrix $A=[e_1,e_2,e_3]$ (see Fig.~\ref{WCST:comparison}). Moreover, we use a standard unit vector $B_\sigma$, which takes values from $\{b_1,b_2,b_3\}$ with $b_i$ being the standard basis of $\R^3$, to respectively describe the 3 sorting rules -- shape, number, and color. In addition, the action $x_t$ also takes value from the set $\{e_1,e_2,e_3,e_4\}$. The action $x_t=e_i$ means to sort the stimulus card to the $i$th card on table. 

Consequently, the WCST can be described by the sequential decision-making model $y_t=x_t^\top \theta_t$ with $\theta_t=A_tB_\sigma$. Here the unit vector $B_\sigma$ can be taken as the current \textit{representation} since the correct sorting action can always be computed by $x_t^*=A_tB_\sigma$ no matter what card the agent sees. For instance, suppose the rule is number (i.e., $B_\sigma=b_2$), if the agent sees the stimulus card with two green circles, i.e., $A=[e_1,e_2,e_3]$, then correct sort is the second card on table since it can be computed that $x^*_t=A_tB_\sigma=[0,1,0,0]^\top$. 

The problem then reduces to learn the underlying representation $B_\sigma$, a task that is much easier than constructing the Q table or training the weights in a Deep-Q network. Remarkably, one does not even need to learn individual $\theta_t$ to construct $B_\sigma$. Instead, $B_\sigma$ can be recovered by 
$
B_\sigma=(\sum\nolimits_{t=1}^{k}A_t^\top x_t x_t^\top A_t)^{-1} \sum\nolimits_{t=1}^{k}A_t^\top x_t y_t 
$
immediately after $\sum_{t=1}^{k}A_t^\top x_t x_t^\top A_t$ becomes invertible. This indicates that our idea in this paper can apply to more general situations.

It can be observed in Fig.~\ref{WCST:comparison} that our algorithm significantly outperforms the other two, which demonstrates the power of being able to abstract compact representations and adapt to new environments.


\section{Concluding Remarks}\label{conclusion}
In this paper, we have studied representation learning for decision-making in environments with contextual changes. To describe such context-changing environments, we employ a decision-making model in which tasks are drawn from distinct sets sequentially. Inspired by strategies taken by humans, we propose an online algorithm that is able to learn and transfer representations under the sequential setting and has the ability to adapt to changing contexts. Some analytical results have been obtained, showing that our algorithm outperforms existing ones that are not able to learn representations. We also apply our algorithm to a real-world task (WCST) and verify the benefits of the ability to learn representations flexibly and adaptively. We are interested in studying representation learning in more general RL frameworks such as Markovian or non-Markovian processes.

\bibliographystyle{IEEEtran}
\bibliography{\alias,\FP,\Main,\New}


\end{document}